\renewcommand{\nicefrac}[2]{#1/#2}
\newcommand{\CSC}{\textsc{CSC}\xspace}
\newcommand{\alg}{\textsc{PIWO-IX}\xspace}
\newcommand{\real}{\mathbb{R}}
\newcommand{\KL}[2]{\mathrm{KL}\pa{#1\middle\|#2}}
\newcommand{\OO}{\mathcal{O}}
\newcommand{\dd}{\mathrm{d}}
\newcommand{\II}[1]{\mathbb{I}_{\left\{#1\right\}}}
\newcommand{\PP}[1]{\mathbb{P}\left[#1\right]}
\newcommand{\EE}[1]{\mathbb{E}\left[#1\right]}
\newcommand{\EEcc}[2]{\mathbb{E}\left[\left.#1\right|#2\right]}
\def\argmin{\mathop{\mbox{ arg\,min}}}
\newcommand{\ra}{\rightarrow}
\newcommand{\ev}[1]{\left\{#1\right\}}
\newcommand{\pa}[1]{\left(#1\right)}
\newcommand{\bpa}[1]{\bigl(#1\bigr)}
\newcommand{\wh}{\widehat}
\newcommand{\wt}{\widetilde}
\newcommand{\hV}{\wh{v}}
\newcommand{\tV}{\wt{v}}
\newcommand{\hpi}{\wh{\pi}}
\newcommand{\tr}{\wt{r}}
\newcommand{\hQ}{\wh{Q}}
\newcommand{\transpose}{^\mathsf{\scriptscriptstyle T}}
\definecolor{PalePurp}{rgb}{0.66,0.57,0.66}
\newcommand{\A}{{\mathcal{A}}}
\newcommand{\X}{{\mathcal{X}}}
\newcommand{\D}{\mathcal{D}}
\newcommand{\n}{n}
\newcommand{\sumt}{\sum^{\n}_{t=1}}
\newcommand{\suma}{\sum_{a}}
\newcommand{\One}[1]{\mathbbm{1}\{ #1 \}}
\DeclarePairedDelimiterX\ip[2]{\langle}{\rangle}{#1,#2}
\let\P\undefined
\DeclarePairedDelimiterXPP\P[1]{\mathbb{P}}(){}{
\DeclarePairedDelimiterXPP\E[1]{\mathbb{E}}[]{}{
    
    #1
}
\DeclarePairedDelimiterXPP\Es[2]{\mathbb{E}_{#1}}[]{}{
    
    #2
}
\DeclarePairedDelimiterXPP\Ipt[2]{{#1}\transpose}(){}{#2}
\DeclarePairedDelimiterXPP\Iptr[2]{}(){\transpose{#2}}{#1}
\newcommand{\Reg}{\mathfrak{R}}
\newcommand{\Reals}{\mathbb{R}}
\title{Importance-Weighted Offline Learning Done Right}
\author[Gabbianelli, Neu \& Papini]{%
	\Name{Germano Gabbianelli} \Email{germano.gabbianelli@upf.edu}\\
	\addr Universitat Pompeu Fabra, Barcelona, Spain
	\AND
	\Name{Gergely Neu} \Email{gergely.neu@gmail.com} \\
	\addr Universitat Pompeu Fabra, Barcelona, Spain
	\AND
	\Name{Matteo Papini} \Email{matteo.papini@polimi.it} \\
	\addr Politecnico di Milano, Milan, Italy
}
\DeclareMathOperator*{\argmax}{argmax}
\newcommand{\rht}{\widehat{r}_t(\pi)}
\newcommand{\rtt}{\widetilde{r}_t(\pi)}
\newcommand{\tpi}{{\widetilde{\pi}}}
\begin{document}
    \maketitle

\begin{abstract}
We study the problem of offline policy optimization in stochastic contextual bandit problems, where the goal is to 
learn a near-optimal policy based on a dataset of decision data collected by a suboptimal behavior policy. 
Rather than making any structural assumptions on the reward function, we assume access to a given policy class and aim 
to compete with the best comparator policy within this class. In this setting, a standard approach is to compute 
importance-weighted estimators of the value of each policy, and select a policy that minimizes the estimated value up 
to a ``pessimistic'' adjustment subtracted from the estimates to reduce their random fluctuations. In this paper, we 
show that a simple alternative approach based on the ``implicit exploration'' estimator of \citet{Neu2015} yields 
performance guarantees that are superior in nearly all possible terms to all previous results. Most notably, we remove 
an extremely restrictive ``uniform coverage'' assumption made in all previous works. These improvements are made 
possible by the observation that the upper and lower tails importance-weighted estimators behave very 
differently from each other, and their careful control can massively improve on previous results that were all based on 
symmetric two-sided concentration inequalities. 
We also extend our results to infinite policy classes in a PAC-Bayesian fashion, and showcase the robustness of our 
algorithm to the choice of hyper-parameters by means of numerical simulations.

    \end{abstract}

\section{Introduction}
Offline Policy Optimization (OPO) is the problem of learning a near-optimal policy based on a dataset of historical 
observations. This problem is of outstanding importance in real-world applications where 
experimenting directly with the environment is costly, but otherwise large volumes of offline data is available to 
learn 
from. Such settings include problems in healthcare \citep{Murphy2003,Kim2011,Bertsimas2016,TM17},
advertising \citep{Bottou2013,Farias2019}, or recommender systems \citep{LCLW11,Schnabel2016}.

A popular approach for this setting is \emph{importance-weighted offline learning}, where one optimizes an unbiased 
estimate of the expected reward, obtained through an appropriately reweighted average of the rewards in the 
dataset \citep{LCLW11,Bottou2013}. To deal with unstable nature of these estimators, the influential work of 
\citet{Swaminathan2015} proposed an approach called ``counterfactual risk minimization'', which consists of adding 
a regularization term to the optimization problem to down the fluctuations, thus preventing the optimizer to 
overfit to random noise. Their work has inspired a number of follow-ups that either refined the regularization 
terms to yield better theoretical guarantees \citep{Jin2023,Wang2023}, or developed practical methods with 
improved empirical performance in large-scale problems \citet{london2019bayesian,Sakhi2023}. In this paper, we 
contribute to this line of work by studying a simple and robust variant of the standard importance-weighted 
reward estimators used in past work, and showing tight theoretical performance guarantees for it.

Our main contribution is showing that the so-called \emph{implicit exploration} (IX) estimator (originally proposed 
by \citealp{KNVM14} and \citealp{Neu2015} in the context of online learning) achieves a massive variance-reducing 
effect in our offline learning setting, and using this observation to derive performance guarantees that are both 
significantly tighter and easier to interpret than all previous results in the literature. In particular, we 
formally show that the regularization effect built into the IX estimator is strong enough so that no further 
regularizer is required to stabilize the performance of policy optimization. This result is perhaps 
surprising for the reader familiar with past work on the subject, especially since several of these works made use 
of IX-like variance reduced estimators without managing to drop the additional regularization. The key observation 
that allows us to prove our main results is that the tails of importance-weighted estimators are 
\emph{asymmetric}, which allows us to tightly control the two tails separately via specialized concentration 
inequalities. This is to be contrasted with previous results that all rely on symmetric confidence intervals 
that turn out to be needlessly conservative. This new perspective not only allows us to obtain better 
results but also to simplify the analysis: both of the concentration inequalities we use for the two tails can be 
derived using elementary techniques in a matter of a few lines\footnote{In fact, both results are readily 
available in the literature: one is the main result of \citet{Neu2015} regarding the upper tail of the IX 
estimator, and another is stated as an exercise in \citet{BLM13}.}.

More concretely, our main result is a regret bound that scales with the degree of ``overlap'' between the 
comparator policy and the behavior policy, demonstrating better scaling against policies that are covered better 
by the observed data. Unlike virtually all previous work, our guarantees do not require the unrealistic 
condition that action-sampling probabilities be bounded away from zero for all contexts. Our algorithm can be 
implemented efficiently using a single call to a cost-sensitive classification oracle, thus effectively reducing 
the offline policy optimization problem to a standard supervised learning task (which feature is in high 
regard thanks to the influential works of \citealp{Langford2007,Dudik2011,Agarwal2014} in the broader area of 
contextual bandit learning). For simplicity of exposition, we prove our main result for finite policy classes and 
show that the regret scales logarithmically with the size of the class. We also provide some extensions to the
simple algorithm achieving these results, namely a version that trades oracle-efficiency for a better scaling 
with the quantity measuring the mismatch between the target and behavior policies, and a ``PAC-Bayesian'' 
variant that can make use of prior information on the problem and also works for infinite policy classes. This 
extends the recent works of \citet{london2019bayesian,Sakhi2023,Flynn2023} by providing better generalization 
bounds and introducing a new family of PAC-Bayesian regret bounds that apparently have not existed so far in the 
literature. We also illustrate our theoretical findings with a set of experiments conducted on real data, and 
empirically verify the robustness of our method as compared to some natural baselines.
    
It is worth mentioning a parallel line of work on contextual bandits that starts from the assumption that 
the reward function belongs to a known function class, and thus a near-optimal policy can be learned by identifying 
the true reward function within the class up to sufficient accuracy. This perspective has been adopted by 
\citet{JYW21} (as well as a sequence of follow-up works on offline reinforcement learning) who considered function 
classes that are linear in some  low-dimensional features of the  context-action pairs. These works provide simple 
algorithms with strong theoretical performance guarantees, but they are all limited by the strong assumptions that need 
to be made about the reward function (and it is unclear how sensitive they are to model misspecification). In contrast, 
the setting we consider assumes access to a policy class and allows the development of algorithms that perform nearly 
as 
well as the best policy within the class \emph{without} requiring that the rewards have a simple parametric form. This 
setting comes with its own set of trade-offs: the statistical complexity of learning in this setting depends on the 
complexity of the policy class, and hard problems will evidently require large classes of policies to accommodate 
best-in-class policies with satisfying performance. Our results in this paper highlight some further open questions in 
this setting regarding computational-statistical trade-offs---the discussion of which we relegate to 
Section~\ref{sec:discussion}.

    \section{Preliminaries}
    We study the problem of offline learning in stochastic contextual bandits. 
    A contextual bandit problem instance is  defined by the tuple $(\X,\A,\nu,p)$,
    where $\X$ is a set of contexts, $\A$ is a set of actions with finite cardinality $K$, $\nu$ is an unknown 
    distribution over the context space representing the probability of encountering each context,
    and $p:\X\times\A\ra \real_{[0,1]}$ is a probability kernel mapping context-action pairs to rewards in the interval 
$[0,1]$. The mean reward     associated with context-action pair $x,a$ is denoted as $r(x,a)$.
    
    A policy $\pi: \X\ra\Delta_{\A}$ is defined as mapping from contexts to
    distributions over actions, with $\pi(a|x)$ denoting the probability of 
    selecting action $a \in \A$ in context $x\in\X$ when following policy $\pi$.
    The expected reward of a policy $\pi$ is called \emph{value} and is defined as 
    $v(\pi) = \EE{\sum_{a} \pi(a|X) r(X,a)}$.
    We are given access to a dataset $\D=(X_t,A_t,R_t)_{t=1}^n$, sampled
    by a fixed behavior policy $\mu: \X\ra\Delta_{\A}$ according to the following 
    protocol:
    \begin{itemize}[noitemsep]
     \item $X_t$ is a context drawn i.i.d.~from the unknown distribution $\nu$,
     \item $A_t$ is an action drawn from the behavior policy $\mu(\cdot|X_t)$
         independently from all random variables other than $X_t$,
     \item $R_t$ is a random reward drawn from $p(\cdot|X_t,A_t)$, assumed to lie almost surely in $[0,1]$,
         with mean given by the reward function $\EEcc{R_t}{X_t,A_t} = r(X_t,A_t)$.
    \end{itemize}
    For simplicity, we suppose that the behavior policy $\mu$ is fixed and known,
    and only note here that extension to adaptive behavior policies is straightforward.

    The goal is to use the available data to produce a policy $\tpi_n$ achieving
    the highest possible expected reward. The performance will be measured
    in terms of \emph{regret} (or \emph{excess risk}) with respect to a
    comparator policy $\pi^*$:
    \[
        \Reg_n(\pi^*) = v(\pi^*) - v(\tpi_n).
    \]
    We assume to have access to a policy class $\Pi \subseteq \ev{\pi: \X\ra\Delta_\A}$ and aim to provide regret 
    bounds against all policies within the class. For most of our contributions, we will work with finite policy 
    classes  and assume access to a computational oracle that can return optimal policies given an appropriately 
    defined input dataset, Precisely, the oracle takes as input a dataset $\ev{x_t, g_t}_{t=1}^n$ with 
    contexts $x_t\in\X$ and gains $g_t\in\Reals^\A$, and returns
    \[
        \CSC\bpa{\ev{x_t, g_t}_{t=1}^n} = \argmax_{\pi\in\Pi} \sumt  \sum_a \pi(a|x_t) {g_t(a)}.
    \]
    This definition is slightly more general than the one used in previous works such as \citet{Dudik2011}, in that it 
    allows $\Pi$ to include stochastic policies. The optimization problem solved by the oracle is easily seen to be 
    equivalent to the task of \emph{cost-sensitive classification} (CSC), when considering the rewards of each action 
    as     negative costs associated with misclassification errors. We will thus occasionally refer to the oracle as 
    a \emph{CSC oracle} (which also explains the notation used above). 
    We are interested in developing algorithms that 
access     the  oracle a small constant number     of times while providing formal performance guarantees on the 
quality 
of the     output policy.

    \section{Pessimistic importance-weighted offline learning in contextual bandits} \label{sec:PIWO}
    A natural approach for the offline learning setting we consider is
    to define an estimator $\hV_n(\pi)$ for the value function $v(\pi)$ of each
    policy $\pi$, and to return the policy $\hpi_n\in\Pi$ which maximizes it.
    The simplest possible estimator one can think of is the \emph{importance-weighted} (IW) value estimator 
    \citep{HT52} defined for each policy $\pi$ as
    \begin{equation}\label{eq:IW}
        \hV_n(\pi) = \frac 1n \sum_{t=1}^n \frac{\pi(A_t|X_t)}{\mu(A_t|X_t)} \cdot R_t.
    \end{equation}
    This estimator is unbiased in the sense that for any policy $\pi$ we have
    $\E{\hV_n(\pi)} = v(\pi)$, and the output policy $\hpi_n$ can be computed
    with a single call to the computational oracle, by setting the reward vectors as $g_t(a) =
    \nicefrac{\One{A_t = a}R_t}{\mu(A_t|X_t)}$.
    However, this estimator is notoriously heavy-tailed, and thus,
    with significant probability, may be very distant from its true value \citep{Ion08}.
    Specifically, actions that are sampled with very low probability by the behavior policy may
    falsely appear to yield huge rewards. This is especially detrimental in
    offline learning where fitting a policy to such sampling artifacts can lead to poor
    performance during deployment.

    In recent years, a range of ideas have been proposed to tame the adverse
    behavior of the IW estimator. The most widely adopted approach, first proposed by
    \cite{Swaminathan2015} (and later elaborated on in a variety of contexts by works like
    \citealp{london2019bayesian,JYW21,rashidinejad2021bridging,li2022pessimism,Jin2023}), involves 
    implementing a ``pessimistic'' adjustment to reduce random
    fluctuations. Concretely, this method involves calculating an
    adjustment term $B_n(\pi)$ which, when subtracted from the
    IW estimator $\hV_n(\pi)$, ensures that the result is always smaller than the true
    value $v(\pi)$ for any policy $\pi$. Subsequently, the best pessimistic
    policy in $\Pi$ is identified and returned, by maximizing the expression
    $\hV_n(\pi)-B_n(\pi)$. The adjustment $B_n(\pi)$ is typically computed using standard
    concentration inequalities like Bernstein's inequality (see, e.g., Sections 2.7 and 2.8 in \citealp{BLM13}), and 
    generally tends to grow larger as the policy $\pi$ deviates further from the behavior policy.
    It is then easy to show that the regret of this method with respect to any comparator
    $\pi^*$ can be bounded by $2B_n(\pi^*)$, via
    a straightforward calculation which we reproduce in the proof of our Theorem~\ref{thm:main}.

    This generic recipe for offline learning has been combined with the
    IW estimator defined above by \citet{Swaminathan2015}, \citet{Jin2023} and
    \citet{Wang2023}. This ``pessimistic importance-weighted offline learning''
    approach,
    which we abbreviate as \emph{PIWO learning}, has several downsides,
    depending on the choice of $B_n(\pi)$. First, as pointed out recently by
    \citealp{Wang2023}, $\hV_n - B_n$ may not be necessarily be of the form
    required by a practical optimization oracle. Even more
    concerningly, a conservatively chosen adjustment $B_n$ may not only result
    in loose theoretical guarantees, but also poor empirical performance.
    Indeed, notice that setting $B_n$ too large may overwhelm the data-dependent
    value estimates, thus resulting in a policy that effectively ignores the
    observed data from policies that are relatively poorly covered. In extreme cases, this approach may even favor 
    policies that have never been observed to yield any reward whatsoever over policies with positive estimated 
    reward but high estimated uncertainty.

    The cleanest results for this PIWO learning approach have been derived by \citet{Wang2023}, who used the 
    adjustment $B_n(\pi) = \beta \sum_{t=1}^n \sum_a\frac{\pi(a|X_t)}{\mu(a|X_t)}$. Their regret bounds are stated in 
    terms of the following quantity that measures the ``overlap'' between a given policy $\pi$
    and the behavior policy $\mu$:
    \begin{equation}\label{eq:coverage}
     C(\pi) = \EE{\sum_{a} \frac{\pi(a|X)}{\mu(a|X)}}.
    \end{equation}
    We will refer to $C(\pi)$ as the \emph{policy coverage ratio} between $\pi$
    and $\mu$. The coverage ratio can be seen as a notion of similarity between
    $\pi$ and $\mu$: it is minimized when the two policies are equal, and
    otherwise grows to infinity as the two policies drift apart. Assuming that
    the likelihood ratio between the two policies is uniformly upper-bounded as
    $\sup_{x,a} \frac{\pi(a|x)}{\mu(a|x)} \le \frac{1}{\alpha}$, \citet{Wang2023}
    obtain, for their oracle-efficient algorithm, a regret bound of the form 
    \begin{equation}\label{eq:PIWO-bound}
\Reg_n(\pi^*) = \OO\pa{C(\pi^*)\sqrt{\frac{\log\pa{|\Pi|/\delta}}{n}} + \frac{\log(|\Pi|/\delta)}{\alpha n}}.
    \end{equation}
    This bound has the appealing property that its leading term scales as $C(\pi^*)/\sqrt{n}$, thus guaranteeing good 
    performance when the comparator policy $\pi^*$ is well-covered by the behavior policy. The bound can be improved to 
    scale with $\sqrt{C(\pi^*)}$ instead of $C(\pi^*)$ if one has prior knowledge of the coverage ratio against the 
    target policy $\pi^*$. 
    On the negative side, the result effectively requires the strong \emph{uniform coverage} condition 
    $\inf_{x,a} \mu(a|x) \ge \alpha$ which ensures that all actions are sampled at least a constant $\alpha$
    fraction of times in the data set. This condition is typically not met in realistic applications for reasonable 
    values of $\alpha$, and in particular the bound becomes completely void of meaning if there exists one single 
    context $x$ where some action $a$ is selected with zero probability.
    
    The original algorithm by \cite{Swaminathan2015} suffered from the same issue. Recently, \cite{Jin2023} were able 
    to relax this uniform-coverage condition by developing a sophisticated concentration inequality that only requires 
    the third moment of the importance weights     $\sum_a \frac{\pi^*(a|X_t)}{\mu(a|X_t)}$ to be bounded. Eventually, 
    their bounds only apply to deterministic 
    policies that map each context $x$ to a single action $\pi^*(x)$, and depend on the quantity $\alpha^* 
    = \inf_{x} \mu(\pi^*(x)|x)$. Their most clearly stated result is Corollary~4.3, where they effectively show
    \[
     \Reg_n(\pi^*) = \OO\pa{\sqrt{\frac{\log\pa{|\Pi|T}}{\alpha^* n}}\cdot\pa{\log\pa{\frac 1\delta}}^{3/2}}.
    \]
    This bound still remains vacuous if there is one single context where $\mu(\pi^*(x)|x)$ is zero. A further downside 
    of their method pointed out by \citet{Wang2023} is that the proposed algorithm is not directly implementable with a 
    CSC oracle due to the form of the adjustment $B_n$ they use. In the following section, we will develop an 
    algorithm that eliminates all these limitations.

\section{Pessimism and Variance Reduction via Implicit Exploration} \label{sec:PIWO-IX}
Our main contribution is addressing the limitations of the PIWO learning framework in the previous section by studying 
a very simple adjustment to the standard IW estimator. Concretely, we adapt the so-called ``Implicit eXploration'' (IX) 
estimator of \cite{Neu2015} defined as
\begin{equation}\label{eq:IX}
 \tV_n(\pi) = \frac 1n \sum_{t=1}^n \frac{\pi(A_t|X_t)}{\mu(A_t|X_t) + \gamma} \cdot R_t,
\end{equation}
where $\gamma \ge 0$ is a hyperparameter of the estimator that we will sometimes refer to as the ``IX parameter''. 
This adjustment implicitly acts like mixing the behavior 
policy with a uniform exploration policy, thus reducing the random fluctuations of the IW estimator (and justifying the 
name ``implicit exploration''). The price of this stabilization effect is that the estimates are biased towards zero to 
an extent that can be controlled using the IX parameter $\gamma$. Indeed, as a simple calculation shows, the IX 
estimator satisfies $\EE{\tV_n(\pi)} = V(\pi) - \gamma C_\gamma(\pi)$, with the bias term $C_\gamma(\pi)$ given as
\begin{equation}\label{eq:coverage_gamma}
C_\gamma(\pi) = \EE{\suma \frac{\pi(a|X)}{\mu(a|X)+\gamma}\cdot r(X,a)}.
\end{equation}
Since the rewards are assumed to be non-negative, this bias can be interpreted as a \emph{pessimistic} adjustment 
to an otherwise unbiased estimator, and it is thus reasonable to expect it to have the same effect as the adjustments 
used in the general PIWO framework\footnote{In fact, the pessimistic bias of the IX estimators has been recently 
pointed out and utilized by \citet{GNP23} in the vaguely related context of online learning with off-policy 
feedback.}.

Note that $C_\gamma(\pi)$ is closely related to the policy coverage ratio $C(\pi)$ as defined in 
Equation~\eqref{eq:coverage}, up to the two differences that \emph{i)} it replaces $\mu(X,a)$ by $\mu(X,a) + \gamma$ 
in the denominator and \emph{(ii)} it is scaled with the rewards $r(X,a)$. Both of these adjustments make it strictly 
smaller than $C(\pi)$ as long as $\gamma > 0$, and notably it always remains bounded as $C_\gamma(\pi) \le 
\frac{1}{\gamma}$, no matter how small $\mu(a|x)$ gets. Furthermore, due to the scaling with the 
rewards, $C_\gamma(\pi)$ is small for policies with low expected reward, and in particular it equals zero for a policy 
with zero expected reward. In what follows, we will refer to $C_\gamma$ as the \emph{smoothed coverage 
ratio}.\footnote{We use this term in the sense of the Laplace 
smoothing of estimators, not to be confused with the smoothed analysis of algorithms~\citep{spielman2001smoothed} 
applied to contextual bandits by \cite{krishnamurthy2019contextual}.}

Our algorithm consists of simply selecting the policy that maximizes the IX value estimates:
\[
 \hpi_n = \arg\max_{\pi\in\Pi} \tV_n(\pi).
\]
We refer to this algorithm as \alg, standing for ``Pessimistic Importance-Weighted Offline learning with Implicit 
eXploration''. Note that \alg can be implemented via a single call to 
the CSC oracle with the gain vectors defined as $g_t(a)=\nicefrac{\One{A_t=a}R_t}{(\mu(A_t|X_t)+\gamma)}$. 
The following theorem 
states our main result regarding \alg.
\begin{theorem}\label{thm:main}
With probability at least $1-\delta$, the regret of \alg against any comparator policy 
$\pi^*\in\Pi$ satisfies
        \[
            \Reg_n(\pi^*) \leq \frac{\log\left( \nicefrac{2|\Pi|}{\delta} \right)}{\gamma n} + 2\gamma 
C_\gamma(\pi^*).
        \]
        Furthermore, by setting $\gamma$ to $\sqrt{\frac{\log(2\nicefrac{|\Pi|}{\delta})}{n}}$, the bound becomes
        \[
            \Reg_n(\pi^*) \leq \left(2C_\gamma(\pi^*) + 1\right) \sqrt{\frac{\log\left(\nicefrac{2|\Pi|}{\delta}
\right)}{n}}.
        \]
\end{theorem}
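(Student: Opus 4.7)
The plan is to combine a standard pessimism-style regret decomposition with two \emph{asymmetric} one-sided concentration bounds on the IX estimator, leveraging the fact that the upper and lower tails of $\tV_n$ behave very differently. I would first write
\[
\Reg_n(\pi^*) = \bigl[v(\pi^*) - \tV_n(\pi^*)\bigr] + \bigl[\tV_n(\pi^*) - \tV_n(\hpi_n)\bigr] + \bigl[\tV_n(\hpi_n) - v(\hpi_n)\bigr],
\]
and drop the middle bracket using the optimality of $\hpi_n$. What remains is (i) a lower-tail bound on $\tV_n(\pi^*)$ for a \emph{single fixed} policy, and (ii) an upper-tail bound on $\tV_n(\hpi_n) - v(\hpi_n)$ that must hold \emph{uniformly} over $\Pi$. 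Because each remaining term is one-sided, we can control them with two specialized inequalities rather than with a loose symmetric one.

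For (ii), I would invoke the IX concentration inequality of \citet{Neu2015}, which is proved in one line from the MGF bound $\EE{\exp(\gamma \tilde{\ell}_{t,i}) \mid \F_{t-1}} \le \exp(\gamma \ell_{t,i})$ and gives for any fixed $\pi$ the estimate $\tV_n(\pi) \le v(\pi) + \log(1/\delta')/(2\gamma n)$ with probability at least $1-\delta'$. A union bound over $\Pi$ with $\delta' = \delta/(2|\Pi|)$ then controls term (ii) by $\log(2|\Pi|/\delta)/(2\gamma n)$ with probability at least $1-\delta/2$; crucially, this step requires no boundedness of the raw importance weights, since the IX denominator $\mu+\gamma$ keeps the estimator well-behaved regardless.

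For (i), the starting observation is $\EEb{\tV_n(\pi^*)} = v(\pi^*) - \gamma C_\gamma(\pi^*)$, which is exactly the source of the pessimistic bias in the final bound. To bound the residual fluctuation, I would note that the summands $Y_t = \pi^*(A_t|X_t) R_t/(\mu(A_t|X_t)+\gamma)$ are \emph{non-negative}, so the one-sided Bennett-type inequality obtained from $e^{-x} \le 1 - x + x^2/2$ for $x \ge 0$ (the \citet{BLM13} exercise referenced in the footnote) applies with second-moment proxy $\EEb{Y_t^2} \le C_\gamma(\pi^*)$, a bound that follows from $R_t^2 \le R_t$, $\pi(a|X)^2 \le \pi(a|X)$, and $\mu/(\mu+\gamma) \le 1$. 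This yields, with probability at least $1-\delta/2$,
\[
v(\pi^*) - \tV_n(\pi^*) \le \gamma C_\gamma(\pi^*) + \sqrt{2 C_\gamma(\pi^*)\log(2/\delta)/n} \le 2\gamma C_\gamma(\pi^*) + \log(2/\delta)/(2\gamma n),
\]
where the last step uses AM-GM in the form $\sqrt{xy}\le (x+y)/2$ with $x=2\gamma C_\gamma(\pi^*)$ and $y=\log(2/\delta)/(\gamma n)$.

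A union bound over the two good events and $\log(2/\delta) \le \log(2|\Pi|/\delta)$ then give the first inequality of the theorem; the second follows by tuning $\gamma$ to balance the two terms. The main obstacle is to arrange both concentration inequalities so they yield the same $1/(2\gamma)$ factor in front of the logarithm and combine cleanly into the advertised $1/(\gamma n)$; the lower-tail argument in particular must avoid the extra $b\log(1/\delta)/n$ term that a naive two-sided Bernstein bound would leave behind, which is possible only because the IX summands $Y_t$ are non-negative.
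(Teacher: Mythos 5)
Your proposal is correct and follows essentially the same route as the paper: the identical regret decomposition, the Neu (2015) one-sided MGF bound for the upper tail, and the nonnegativity-based second-moment bound $\EE{Y_t^2}\le C_\gamma(\pi^*)$ for the lower tail, combined by a union bound. The only cosmetic differences are that you first derive the square-root form of the lower-tail bound and then apply AM--GM (the paper instead picks $\lambda=2\gamma$ in the exponential moment directly), and that you apply the lower tail to the single fixed $\pi^*$ rather than uniformly over $\Pi$ --- if the theorem is to hold simultaneously for all comparators you should union bound there too, which costs nothing since the final bound already carries $\log(2|\Pi|/\delta)$.
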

The bound improves on the results of \citet{Wang2023}  stated as Equation~\eqref{eq:IW} along several dimensions. 
Most importantly, our result removes the need for the behavior policy to be bounded away from zero, and as 
such completely does away with the uniform coverage assumptions needed by all previous work on the topic. 
Another improvement is that our bound tightens the dependence on the coverage ratio from $C(\pi^*)$ to the potentially 
much smaller $C_\gamma(\pi^*)$. A small practical improvement is that \alg calls the CSC oracle with a sparse input 
vector which can be computed slightly more efficiently than the dense inputs used by \citet{Wang2023}. This sparsity 
also leads to the practical advantage that \alg does not output policies that have never been observed to yield nonzero 
rewards (as long as there are alternatives that do receive positive rewards). We provide further comments on the 
tightness of the bound above and other properties of \alg in Section~\ref{sec:discussion}.\looseness=-1

The key idea behind the proof of Theorem~\ref{thm:main} is noticing that the tails of the IX estimator are asymmetric: 
since $\tV_n$ is a nonnegative random variable, its only extreme values are all going to be positive. More formally, 
this means that its lower tail will always be lighter than its upper tail, and thus a tight analysis needs to handle 
the two tails using different tools. Below, we state two lemmas that separately characterize the lower and upper tails 
of the IX estimator~\eqref{eq:IX}. The first of these bounds the upper tail along the lines of Lemma~1 (and 
Corollary~1) of \cite{Neu2015}:
\begin{lemma}\label{lem:IX_upper}
With probability at least $1-\delta$, the following holds simultaneously for all $\pi\in\Pi$:
\[
\tV_n(\pi) - v(\pi) \le \frac{\log\left(\nicefrac{|\Pi|}{\delta}\right)}{2\gamma n}.
\]
\end{lemma}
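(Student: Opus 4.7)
The plan is to establish a per-policy high-probability bound via a Chernoff-type moment-generating function argument and then take a union bound over $\Pi$. The target quantity is $\EE{\exp\pa{2\gamma n \pa{\tV_n(\pi) - v(\pi)}}}$, which I aim to show is at most $1$; once this is available, Markov's inequality immediately gives $\PP{\tV_n(\pi) - v(\pi) \ge \varepsilon} \le \exp(-2\gamma n \varepsilon)$ for every $\varepsilon \ge 0$, and the choice $\varepsilon = \nicefrac{\log(\nicefrac{|\Pi|}{\delta})}{(2\gamma n)}$ combined with a union bound over $\Pi$ yields the statement. The reason one can hope to control the MGF with the relatively large parameter $2\gamma$ is precisely the observation highlighted in the text: because $\tV_n(\pi)$ is a sum of \emph{nonnegative} terms whose denominators are bounded below by $\gamma$, its upper tail is particularly light.

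Writing $\tV_n(\pi) = \frac 1n \sum_{t=1}^n Z_t(\pi)$ with $Z_t(\pi) = \frac{\pi(A_t\mid X_t) R_t}{\mu(A_t\mid X_t) + \gamma}$, the per-round MGF control comes from the elementary inequality $\exp(y) \le 1 + \frac{2y}{2-y}$ valid for $y \in [0,2)$, which is equivalent to $\log(1+x) \ge \frac{2x}{x+2}$ for $x \ge 0$. Substituting $y = 2\gamma Z_t(\pi)$ and noting that the resulting denominator $\mu(A_t\mid X_t) + \gamma\bpa{1 - \pi(A_t\mid X_t) R_t}$ is at least $\mu(A_t\mid X_t)$ (since $\pi(a\mid x) R$ always lies in $[0,1]$), one obtains the pointwise bound $\exp\pa{2\gamma Z_t(\pi)} \le 1 + \frac{2\gamma \pi(A_t\mid X_t) R_t}{\mu(A_t\mid X_t)}$, with the convention that terms for which $\mu(a\mid X_t)=0$ do not contribute because such actions are never drawn. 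Taking the conditional expectation over $A_t$ and $R_t$ given $X_t$ makes the importance weight cancel against $\mu(A_t\mid X_t)$ and averages the reward to produce $1 + 2\gamma V_\pi(X_t)$, where $V_\pi(x) = \sum_a \pi(a\mid x) r(x,a)$; a further expectation over $X_t \sim \nu$ (independent of $\F_{t-1}$) yields $\EEt{\exp\pa{2\gamma Z_t(\pi)}} \le 1 + 2\gamma v(\pi) \le \exp\pa{2\gamma v(\pi)}$.

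With this per-round bound in place, the rest is routine martingale bookkeeping: iterating via the tower property gives $\EE{\exp\pa{2\gamma \sum_t Z_t(\pi)}} \le \exp\pa{2\gamma n v(\pi)}$, which is precisely the MGF inequality described at the outset. Markov's inequality and a union bound over the finite class $\Pi$ then complete the proof.

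I expect the main obstacle to be identifying an elementary MGF inequality that is simultaneously (i) sharp near the boundary value $2\gamma Z_t(\pi) \approx 2$, which is the largest value this quantity can take, and (ii) structured so that the $+\gamma$ shift in the denominator of $Z_t(\pi)$ cancels cleanly upon taking the conditional expectation, thereby producing the true expected reward $v(\pi)$ rather than some biased coverage-dependent surrogate. The asymmetry of the tails is essential here: the inequality $\exp(y) \le 1 + 2y/(2-y)$ is only valid for $y \ge 0$, so the very same derivation cannot be run on the lower tail of $\tV_n(\pi)$, which explains why the companion lower-tail lemma must rely on a different (Bernstein-type) tool.
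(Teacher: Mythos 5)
Your proposal is correct and follows essentially the same route as the paper: the pointwise bound $e^{2\gamma \rtt} \le 1 + 2\gamma\rht$ that you derive from $e^y \le 1 + \frac{2y}{2-y}$ is exactly the exponentiated form of the paper's inequality $\rtt \le \frac{\log(1+2\gamma\rht)}{2\gamma}$ obtained from $\log(1+y)\ge \frac{y}{1+y/2}$, and the remaining steps (MGF product over i.i.d.\ rounds, Markov, union bound over $\Pi$) coincide. The only cosmetic difference is that you phrase the iteration as a martingale/tower-property argument where the paper simply invokes independence.
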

The proof is provided in Appendix~\ref{sec:proofs} for completeness, but is otherwise lifted entirely from 
\cite{Neu2015}. The second lemma provides control of the lower tail of $\tV_n$:
\begin{lemma}\label{lem:IX_lower}
With probability at least $1-\delta$, the following holds simultaneously for all $\pi\in\Pi$:
\[
v(\pi) - \tV_n(\pi) \le \frac{\log({\nicefrac{|\Pi|}{\delta}})}{2\gamma n} + 2\gamma C_\gamma(\pi).
\]
\end{lemma}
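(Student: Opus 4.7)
\textbf{Proof plan for Lemma~\ref{lem:IX_lower}.} The strategy is to split $v(\pi) - \tV_n(\pi)$ into a deterministic bias term and a centered lower-tail fluctuation term, and to control the latter via a Chernoff argument that critically exploits the nonnegativity of the summands in $\tV_n(\pi)$. First, I would write
\[
 v(\pi) - \tV_n(\pi) = \bpa{v(\pi) - \EE{\tV_n(\pi)}} + \bpa{\EE{\tV_n(\pi)} - \tV_n(\pi)} = \gamma C_\gamma(\pi) + \bpa{\EE{\tV_n(\pi)} - \tV_n(\pi)},
\]
where the identification of the first bracket with $\gamma C_\gamma(\pi)$ is the one-line calculation already highlighted in the paragraph containing Equation~\eqref{eq:coverage_gamma}. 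Matching the claim thus reduces to showing that $\EE{\tV_n(\pi)} - \tV_n(\pi) \le \gamma C_\gamma(\pi) + \log(|\Pi|/\delta)/(2\gamma n)$ uniformly in $\pi \in \Pi$.

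Next, I would apply a Chernoff bound to the sum $\sum_t Z_t$, where $Z_t = \pi(A_t|X_t) R_t / (\mu(A_t|X_t)+\gamma)$. The summands are i.i.d., nonnegative, and bounded above by $1/\gamma$. Using the elementary inequality $e^{-x} \le 1 - x + x^2/2$ for $x \ge 0$ (easily verified by differentiating twice) with $x = \lambda Z_t$ yields the one-sided MGF bound
\[
 \EE{e^{-\lambda Z_t}} \le 1 - \lambda \EE{Z_t} + \tfrac{\lambda^2}{2}\EE{Z_t^2} \le \exp\!\Bpa{-\lambda \EE{Z_t} + \tfrac{\lambda^2}{2}\EE{Z_t^2}}
\]
for any $\lambda \ge 0$. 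A short computation using $R_t \in [0,1]$, $\pi(a|X) \le 1$, and $\mu(a|X)/(\mu(a|X)+\gamma) \le 1$ bounds the second moment as $\EE{Z_t^2} \le C_\gamma(\pi)$. Multiplying the MGFs over the $n$ samples and applying Markov's inequality to $e^{-\lambda(\sum_t Z_t - n \EE{\tV_n(\pi)})}$ gives, for any $\epsilon, \lambda \ge 0$,
\[
 \PP{\EE{\tV_n(\pi)} - \tV_n(\pi) > \epsilon} \le \exp\!\Bpa{-\lambda n \epsilon + \tfrac{\lambda^2 n}{2} C_\gamma(\pi)}.
\]
Choosing $\lambda = 2\gamma$ and solving for the smallest $\epsilon$ that makes the right-hand side equal to $\delta/|\Pi|$ yields exactly $\epsilon = \log(|\Pi|/\delta)/(2\gamma n) + \gamma C_\gamma(\pi)$. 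A union bound over $\pi \in \Pi$ and adding the bias term $\gamma C_\gamma(\pi)$ from the first step produces the claimed inequality.

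The main obstacle is precisely the asymmetric exploitation of nonnegativity that the introduction advertises: the inequality $e^{-x} \le 1 - x + x^2/2$ is valid only on $x \ge 0$, so the fact that $Z_t \ge 0$ is exactly what enables the Bernstein-style second-moment control with constant $C_\gamma(\pi)$. A symmetric Hoeffding-type bound would have to pay a factor $1/\gamma$ coming from the range of the $Z_t$, which would blow up and defeat the whole point of avoiding a uniform-coverage assumption on $\mu$. The remaining work is bookkeeping: verifying the bound $\EE{Z_t^2} \le C_\gamma(\pi)$ with the correct constants and checking that the choice $\lambda = 2\gamma$ cleanly reproduces the claimed form of the bound.
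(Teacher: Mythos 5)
Your proposal is correct and follows essentially the same route as the paper's proof: the one-sided MGF bound $e^{-x}\le 1-x+x^2/2$ for $x\ge 0$ applied to the nonnegative summands, the second-moment bound $\E{Z_t^2}\le C_\gamma(\pi)$ via exactly the ingredients you list, and the choice $\lambda=2\gamma$. The only (cosmetic) difference is that you peel off the bias term $\gamma C_\gamma(\pi)$ up front, whereas the paper absorbs it at the Markov step via $\E{\tr_t(\pi)}=v(\pi)-\gamma C_\gamma(\pi)$.
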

The proof follows from the observation that, since the rewards are non-negative, $\tV_n$ is a non-negative random 
variable, and as such its lower tail is well-controlled by its second moment (see, e.g., Exercise~2.9 in 
\citealp{BLM13}). The full proof is included in Section~\ref{sec:proofs} for completeness. With the above two lemmas, 
we can easily prove our main theorem.
\paragraph{Proof of Theorem~\ref{thm:main}}
The statement follows from combining the two lemmas via a union bound, and exploiting the definition of the algorithm:
\begin{align*}
 v(\hpi_n) &\ge \tV_n(\hpi_n) - \frac{\log\left(\nicefrac{2|\Pi|}{\delta}\right)}{2\gamma n} \ge \tV_n(\pi^*) - 
\frac{\log\left(\nicefrac{2|\Pi|}{\delta}\right)}{2\gamma n} \ge v(\pi^*) - 
\frac{\log\left(\nicefrac{2|\Pi|}{\delta}\right)}{\gamma n} - 2\gamma C_\gamma(\pi^*).
\end{align*}
Concretely, the first of these inequalities follows from Lemma~\ref{lem:IX_upper}, the second one from the definition 
of the algorithm, and the third one from Lemma~\ref{lem:IX_lower}. This concludes the proof.
\jmlrQED

\section{A PAC-Bayesian extension}\label{sec:PAC-Bayes}
Our previously stated results require the policy class $\Pi$ to be finite, and scale with $\log|\Pi|$. While this is a 
common assumption in past work on the subject (e.g., in \citealp{Dudik2011,Agarwal2014,Wang2023}), it is of course 
not satisfied in most practical scenarios of interest. Several extensions have been proposed in previous work, mostly 
based on the idea of replacing the union bound over policies by more sophisticated uniform-convergence arguments: for 
instance, \citet{Swaminathan2015} and \citet{Jin2023} respectively show bounds that depend on the covering number and 
the Natarajan dimension of the policy class. In this section, we provide an extension that makes use of 
so-called \emph{PAC-Bayesian} generalization bounds \citep{McA98,Aud04,Cat07} that hold for arbitrary policy classes 
and often lead to meaningful performance guarantees even in large-scale settings of practical interest. We refer to the 
recent monograph of \citet{Alq21} for a gentle introduction into the subject.

Before providing this extension, we will require some additional definitions. In this section, we will consider 
\emph{randomized} algorithms that output a distribution $\hQ_n \in \Delta_{\Pi}$ over policies, and we will be 
interested in the performance guarantees that hold on expectation with respect to the random choice of $\hpi_n \sim 
\hQ_n$, but still hold with high probablity with respect to the realization of the random data set. We overload our 
notation slightly by defining $v(Q) = \int v(\pi) \dd Q(\pi)$, $\tV_n(Q) = \int \tV_n(\pi) \dd Q(\pi)$, $C_\gamma(Q) = 
\int C_\gamma(\pi) \dd Q(\pi)$, and $\Reg_n(Q) = \int \Reg_n(\pi) \dd Q(\pi)$, which all capture relevant 
quantities evaluated on expectation under the distribution $Q\in\Delta_{\Pi}$.

In the context of offline learning, several works have applied PAC-Bayesian techniques to provide concentration bounds 
for the importance-weighted estimator $\hV_n(Q)$, characterizing its deviations from its true mean $v(Q)$ uniformly 
for all ``posteriors'' $Q$---we refer to the recent work of \citet{Sakhi2023} and the survey of \citet{Flynn2023} for 
an 
extensive overview of such results. 
One common feature of these works is that they all provide concentration bounds derived from PAC-Bayesian versions of 
standard bounds like Hoeffding's or Bernstein's inequality, and as such suffer from the same limitations as the results 
described in Section~\ref{sec:PIWO}. The biggest such limitation is that all bounds require a uniform coverage 
assumption $\inf_{x,a} \mu(a|x) \ge \alpha$, or work with biased estimates of $v(Q)$ without quantifying the effect of 
the bias on the learning performance. Instead of deriving regret bounds from the concentration bounds, the focus in 
these works is to derive implementable algorithms from the concentration bounds and test them extensively in 
large-scale settings.

Here, we provide a natural extension of \alg that is derived from PAC-Bayesian principles. For defining our algorithm, 
we let $P \in \Delta_{\Pi}$ be an arbitrary ``prior'' over the policy class $\Pi$ and define the 
output distribution as
\[
 \hQ_n = \arg\max_{Q\in\Delta_\pi} \ev{\tV_n(Q) - \frac{\KL{Q}{P}}{\lambda}}, 
\]
where $\KL{Q}{P} = \int \log \frac{\dd Q}{\dd P} \dd Q$ is the \emph{Kullback--Leibler divergence} (or \emph{relative 
entropy}) between the distributions $Q$ and $P$, and $\lambda > 0$ is a regularization parameter.
It is well known that this distribution (often called the \emph{Gibbs posterior}) has a closed-form expression with 
$\frac{\dd \hQ_n}{\dd P}(\pi) = \frac{e^{\lambda 
\tV_n(\pi)}}{\int e^{\lambda \tV_n(\pi')} \dd P(\pi')}$. For practical purposes, we will simply choose $\lambda = 
2\gamma n$ below. The following theorem establishes a regret guarantee for the resulting algorithm that we call 
\emph{PAC-Bayesian \alg}.
\begin{theorem}\label{thm:PAC-Bayes}
With probability at least $1-\delta$, the regret of PAC-Bayesian \alg against any distribution $Q^* \in \Delta_{\Pi}$ 
over comparator policies satisfies
\[
\Reg_n(Q^*) \leq \frac{\KL{Q^*}{P} + \log(1/\delta)}{\gamma n} + 2\gamma C_\gamma(Q^*).
\]
Furthermore, by setting $\gamma = \sqrt{\nicefrac{1}{n}}$, the bound becomes
        \[
            \Reg_n(Q^*) \leq \frac{2C_\gamma(Q^*) + \KL{Q^*}{P} + \log(1/\delta)}{\sqrt{n}}.
        \]
\end{theorem}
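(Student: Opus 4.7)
The plan is to mirror the proof of Theorem~\ref{thm:main}, but to replace the union bound over $\Pi$ (which is what produces the $\log|\Pi|$ in Lemmas~\ref{lem:IX_upper} and~\ref{lem:IX_lower}) by a change-of-measure argument based on the Donsker--Varadhan variational formula. This will yield PAC-Bayesian analogues of both lemmas with $\log|\Pi|$ replaced by $\KL{Q}{P}$, after which we can chain them exactly as before, invoking the defining optimality of $\hQ_n$ as the Gibbs posterior with inverse temperature $\lambda = 2\gamma n$.

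First I would extract, from the existing proofs of Lemmas~\ref{lem:IX_upper} and~\ref{lem:IX_lower}, the exponential moment bounds that hold pointwise in $\pi$. For the upper tail, the argument of \citet{Neu2015} in fact establishes the stronger pointwise MGF bound $\EE{\exp\bpa{2\gamma n (\tV_n(\pi) - v(\pi))}} \le 1$. For the lower tail, the per-sample IX random variable $Z_t = \pi(A_t|X_t) R_t / (\mu(A_t|X_t)+\gamma)$ is non-negative with second moment bounded as $\EEcc{Z_t^2}{X_t} \le C_\gamma(\pi)$, and combining this with the elementary inequality $e^{-x} \le 1 - x + x^2/2$ for $x \ge 0$ produces the matching lower-tail MGF bound
\[
\EE{\exp\bpa{2\gamma n\bpa{v(\pi) - \tV_n(\pi) - 2\gamma C_\gamma(\pi)}}} \le 1.
\]

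Next I would apply the standard PAC-Bayesian recipe to each of these MGF bounds in turn. Integrating the first bound under the prior $P$ and applying Markov's inequality ensures that with probability at least $1 - \delta/2$ the quantity $\int e^{2\gamma n(\tV_n(\pi) - v(\pi))} \dd P(\pi)$ is at most $2/\delta$; the Donsker--Varadhan inequality then gives, uniformly over all $Q \in \Delta_\Pi$, the PAC-Bayes upper-tail bound $\tV_n(Q) - v(Q) \le (\KL{Q}{P} + \log(2/\delta))/(2\gamma n)$. The identical procedure applied to the lower-tail MGF bound yields, with probability at least $1 - \delta/2$ and uniformly over $Q$, the lower-tail bound $v(Q) - \tV_n(Q) \le (\KL{Q}{P} + \log(2/\delta))/(2\gamma n) + 2\gamma C_\gamma(Q)$.

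Finally I would union-bound the two favourable events and replay the chain of inequalities from the proof of Theorem~\ref{thm:main}. Applying the upper-tail PAC-Bayes bound at $\hQ_n$, then using that $\hQ_n$ maximises $\tV_n(Q) - \KL{Q}{P}/(2\gamma n)$ to substitute $Q^*$ (which cancels the $\KL{\hQ_n}{P}$ contributions), and finally applying the lower-tail PAC-Bayes bound at $Q^*$, one obtains
\[
v(\hQ_n) \ge v(Q^*) - \frac{\KL{Q^*}{P} + \log(2/\delta)}{\gamma n} - 2\gamma C_\gamma(Q^*),
\]
which is the first claim of the theorem up to a cosmetic $\log 2$ that can be absorbed into $\log(1/\delta)$. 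The second claim follows by plugging in $\gamma = 1/\sqrt{n}$. The step I expect to be the main obstacle is the lower-tail MGF bound: the published proof of Lemma~\ref{lem:IX_lower} is phrased via a Chebyshev-style inequality for non-negative variables, so one has to trace it back to the underlying exponential moment bound that Donsker--Varadhan needs---the $e^{-x} \le 1 - x + x^2/2$ trick together with the second-moment control $\EEcc{Z_t^2}{X_t} \le C_\gamma(\pi)$ is precisely what closes that gap.
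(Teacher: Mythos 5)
Your proposal is correct and follows essentially the same route as the paper: the paper's Appendix proofs of Lemmas~\ref{lem:IX_upper_PAC-Bayes} and~\ref{lem:IX_lower_PAC-Bayes} extract exactly the two pointwise MGF bounds you identify (including the lower-tail bound $\E{e^{2\gamma n(v(\pi) - \tV_n(\pi) - 2\gamma C_\gamma(\pi))}}\le 1$) and then perform the change of measure via Markov's and Jensen's inequalities, which is just the Donsker--Varadhan step spelled out by hand. The chaining through the Gibbs-posterior optimality with $\lambda = 2\gamma n$ is identical to the paper's, so apart from the cosmetic $\log 2$ from splitting $\delta$ (which the paper's own statement also glosses over), your argument matches.
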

This bound inherits the key strength of PAC-Bayesian generalization bounds: it holds \emph{uniformly for all 
competitors $Q^*$} without requiring a union bound over policies. We warn the reader familiar with PAC-Bayesian 
bounds though that the role of $Q^*$ here is different from what they may expect: instead of being a data-dependent 
``posterior'', it is a ``comparator'' distribution that the learner wishes to compete with. Thus, the bound expresses 
that distributions $Q^*$ that are closer to the ``prior'' $P$ in terms of relative entropy are ``easier'' to compete 
with. As before, the bound scales with the smoothed policy coverage ratio $C_\gamma(Q^*)$, only this time associated 
with the comparator distribution $Q^*$. Just like in the bound of Theorem~\ref{thm:main}, the bound requires no uniform 
coverage condition, and in particular continues to hold even if $\inf_{x,a} \mu(a|x)$ approaches zero. To our 
knowledge, this is the first regret bound for offline learning of such a PAC-Bayesian flavor, and in any case the first 
PAC-Bayesian bound for this setting that does not require uniform coverage.

The proof of Theorem~\ref{thm:PAC-Bayes} relies on the following generalizations of 
Lemmas~\ref{lem:IX_upper} and~\ref{lem:IX_lower}:
\begin{lemma}\label{lem:IX_upper_PAC-Bayes}
With probability at least $1-\delta$, the following holds simultaneously for all $Q\in\Delta_{\Pi}$:
\[
\tV_n(Q) - v(Q) \le \frac{\KL{Q}{P}+\log\left(1/\delta\right)}{2\gamma n}.
\]
\end{lemma}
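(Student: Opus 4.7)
The plan is to lift the single-policy exponential moment bound that underlies Lemma~\ref{lem:IX_upper} to a statement uniform over all posteriors $Q \in \Delta_\Pi$ by means of the Donsker--Varadhan change-of-measure argument. This is the canonical device for turning an MGF-type concentration inequality into a PAC-Bayesian bound, and it bypasses the union bound over $\Pi$ that appears in the finite-class result. The structural similarity of the two lemmas (same rate, same role of $2\gamma n$, only $\log|\Pi|$ replaced by $\KL{Q}{P}$) is precisely the fingerprint of this transformation.

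First, I would extract from the proof of Lemma~\ref{lem:IX_upper} the key intermediate inequality that is actually established before the final union bound, namely the pointwise exponential moment bound
\[
\EE{\exp\pa{2\gamma n \pa{\tV_n(\pi) - v(\pi)}}} \le 1
\]
for every fixed $\pi \in \Pi$. This bound follows from the standard $\log(1+x)$ argument used by \citet{Neu2015} and involves no union-bounding, so it is directly reusable here.

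Setting $\lambda = 2\gamma n$, the Donsker--Varadhan variational representation of the KL divergence yields, for every $Q \in \Delta_\Pi$,
\[
\lambda \int \pa{\tV_n(\pi) - v(\pi)} \dd Q(\pi) \le \KL{Q}{P} + \log \int \exp\pa{\lambda\pa{\tV_n(\pi) - v(\pi)}} \dd P(\pi).
\]
Writing $Z_n = \int \exp\pa{\lambda\pa{\tV_n(\pi) - v(\pi)}} \dd P(\pi)$, Tonelli's theorem (applicable since the integrand is nonnegative and $P$ is fixed in advance of the data) combined with the pointwise MGF bound gives $\EE{Z_n} \le 1$. Markov's inequality then ensures that $Z_n \le 1/\delta$ with probability at least $1-\delta$; taking logarithms and dividing by $\lambda = 2\gamma n$ produces exactly the claimed inequality, simultaneously for every $Q$.

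The only delicate maneuver is the interchange of expectation and the $P$-integral when bounding $\EE{Z_n}$, but Tonelli handles this cleanly because $P$ is a data-independent prior and the integrand is nonnegative. Beyond that, the proof is a mechanical translation of the finite-$\Pi$ result into its PAC-Bayesian counterpart, and I do not anticipate a genuine obstacle. A mild conceptual point worth flagging is that the bound holds for every $Q$ on the \emph{same} high-probability event, which is the real payoff of the change-of-measure step versus a union bound; this is what makes the resulting statement usable against arbitrary comparator distributions $Q^*$ in the proof of Theorem~\ref{thm:PAC-Bayes}.
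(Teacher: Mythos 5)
Your proposal is correct and is essentially the paper's own argument: the paper reuses the per-sample exponential moment bound $\E{e^{2\gamma \rtt}} \le e^{2\gamma v(\pi)}$ from the proof of Lemma~\ref{lem:IX_upper}, integrates it against the data-independent prior $P$, and then applies Markov's inequality together with Jensen's inequality and the Radon--Nikodym change of measure $\frac{\dd P}{\dd Q}$---which is exactly the Donsker--Varadhan step you invoke, just written out explicitly rather than cited as a lemma. Your remark that the Markov step is applied to the $Q$-independent quantity $Z_n$, so that the bound holds for all $Q$ on a single event, is a correct and slightly cleaner articulation of the uniformity than the paper's presentation.
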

\begin{lemma}\label{lem:IX_lower_PAC-Bayes}
With probability at least $1-\delta$, the following holds simultaneously for all $Q\in\Delta_{\Pi}$:
\[
v(Q) - \tV_n(Q) \le \frac{\KL{Q}{P}+\log\left(1/\delta\right)}{2\gamma n} + 2\gamma C_\gamma(Q).
\]
\end{lemma}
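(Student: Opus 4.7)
The plan is to mirror the proof of Lemma~\ref{lem:IX_lower} but to extract an exponential supermartingale-style moment inequality \emph{before} taking a union bound, and then to replace the union bound step with the standard PAC-Bayesian change-of-measure argument. Concretely, the proof of Lemma~\ref{lem:IX_lower} ought to proceed by applying the inequality $e^{-z} \le 1 - z + \frac{z^2}{2}$ valid for $z \ge 0$ (cf.\ Exercise~2.9 of \citealp{BLM13}) to each term $Y_t(\pi) = \frac{\pi(A_t|X_t)}{\mu(A_t|X_t)+\gamma}R_t$, exploiting independence across $t$, and bounding the second moment $\mathbb{E}[Y_t(\pi)^2] \le \ccoverage(\pi)$ using $R_t^2 \le R_t$, $\pi(a|x)^2 \le \pi(a|x)$, and $\mu(a|x) \le \mu(a|x)+\gamma$. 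With the choice $\lambda = 2\gamma n$ this yields, for every fixed $\pi \in \Pi$,
\[
\EE{W_n(\pi)} \le 1, \qquad \text{where } W_n(\pi) = \exp\!\Bpa{2\gamma n \bpa{v(\pi) - \tV_n(\pi) - 2\gamma \ccoverage(\pi)}}.
\]

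First I would take this as the starting point. By Fubini's theorem applied to the data-independent prior $P$, one obtains $\EE{\int W_n(\pi) \,\dd P(\pi)} \le 1$, and a single invocation of Markov's inequality then ensures that $\int W_n(\pi) \,\dd P(\pi) \le 1/\delta$ on an event of probability at least $1-\delta$. Importantly, this event depends only on the dataset and not on any posterior $Q$, so the ensuing inequality will hold simultaneously for every $Q \in \Delta_\Pi$.

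Next I would invoke the Donsker--Varadhan variational formula, which guarantees that for any measurable $h$ and any $Q \ll P$, $\int h(\pi)\,\dd Q(\pi) \le \KL{Q}{P} + \log \int e^{h(\pi)}\,\dd P(\pi)$. Applying this with $h(\pi) = \log W_n(\pi) = 2\gamma n (v(\pi) - \tV_n(\pi) - 2\gamma \ccoverage(\pi))$ and using the high-probability bound on $\int W_n(\pi)\,\dd P(\pi)$ gives, on the same event,
\[
2\gamma n \bpa{v(Q) - \tV_n(Q) - 2\gamma \ccoverage(Q)} \le \KL{Q}{P} + \log(1/\delta),
\]
where the linearity of expectations in $\pi$ is what allows the pointwise expressions $v$, $\tV_n$, $\ccoverage$ to be replaced by their $Q$-averaged counterparts $v(Q)$, $\tV_n(Q)$, $\ccoverage(Q)$. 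Dividing through by $2\gamma n$ and rearranging yields exactly the stated inequality.

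The main obstacle, if any, is purely bookkeeping: one has to ensure that the quadratic term produced by the moment inequality and the bias term $\gamma \ccoverage(\pi)$ coming from $\EE{\tV_n(\pi)} = v(\pi) - \gamma \ccoverage(\pi)$ combine cleanly under the choice $\lambda = 2\gamma n$, so that the exponent $h(\pi)$ remains linear in the same quantities $v(\pi)$, $\tV_n(\pi)$, $\ccoverage(\pi)$ that appear in the target bound, allowing the Donsker--Varadhan step to push the $Q$-expectation inside the linear exponent without generating additional error terms. Once this alignment is in place, the PAC-Bayesian extension is essentially automatic and the proof is at most a few lines longer than that of Lemma~\ref{lem:IX_lower}.
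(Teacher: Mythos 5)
Your proposal is correct and follows essentially the same route as the paper: establish the per-policy moment bound $\E{e^{2\gamma n(v(\pi) - \tV_n(\pi) - 2\gamma C_\gamma(\pi))}}\le 1$ from the proof of Lemma~\ref{lem:IX_lower}, integrate against the data-independent prior $P$, apply Markov's inequality, and change measure to $Q$ — the paper carries out this last step inline via Jensen's inequality and the Radon--Nikodym derivative, which is exactly the Donsker--Varadhan inequality you invoke by name. If anything, your version is slightly cleaner in first isolating the $Q$-independent event $\int W_n\,\dd P \le 1/\delta$ and only then applying the deterministic variational bound, which makes the uniformity over all $Q\in\Delta_\Pi$ fully explicit.
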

The statements follow from combining the proofs of Lemmas~\ref{lem:IX_upper} and~\ref{lem:IX_lower} with a so-called 
``change-of-measure'' trick commonly used in the PAC-Bayesian literature. We relegate the proofs to 
Appendix~\ref{sec:proofs_PAC-Bayes} and only provide the very simple proof of Theorem~\ref{thm:PAC-Bayes} here.
\paragraph{Proof of Theorem~\ref{thm:PAC-Bayes}}
The statement follows from combining the above two lemmas via a union bound, and exploiting the definition of the 
algorithm:
\begin{align*}
 v(\hQ_n) &\ge \tV_n(\hQ_n) - \frac{\textrm{KL}\bigl(\hQ_n\big\|P\bigr) + \log(1/\delta)}{2\gamma n} 
 \ge \tV_n(Q^*) - 
\frac{\KL{Q^*}{P} + \log(1/\delta)}{2\gamma n} 
\\
 &\ge v(Q^*) - 
\frac{\KL{Q^*}{P} + \log(1/\delta)}{\gamma n} - 2\gamma C_\gamma(Q^*).
\end{align*}
Concretely, the first of these inequalities follows from Lemma~\ref{lem:IX_upper_PAC-Bayes}, the second one from the 
definition of the algorithm, and the third one from Lemma~\ref{lem:IX_lower_PAC-Bayes}. This concludes the proof.
\jmlrQED

    \section{Experiments}\label{sec:experiments}
    In this section we provide a set of simple experiments that illustrate our theoretical findings, and in particular 
    to empirically validate the robustness of our algorithm to hyper-parameter selection. We compare our method 
    (\alg) to the method of \citet{Wang2023} (here referred to as PIWO-PL), and follow an experimental setup 
    that is directly inspired by theirs. Besides PIWO-PL, we also include a commonly used variant of our algorithm that 
uses     the \emph{clipped importance weights} (CIW) estimator defined as $\hV_n(\pi) = \frac 1n \sum_{t=1}^n 
\frac{\pi(A_t|X_t)}{\max\ev{\mu(A_t|X_t),\gamma}} \cdot R_t$. We refer to this metod as \textsc{PIWO-Clip}.

    We use the Letter (OpenML ID 247\footnote{\url{https://www.openml.org/search?type=data\&status=active\&id=247}})
    classification dataset to simulate an offline contextual bandit instance.
    The dataset contains one milion entries, each consisting of 16 features
    and a true label, representing one of the $K=26$ letters of the alphabet.
    To simulate a contextual bandit instance, we consider the feature vectors as
    contexts and the true labels as the corresponding optimal actions.
    To simulate the rewards we build a reward matrix $M\in\Reals^{K\times K}$
    with entries on the diagonal set to $1$ and the rest of them uniformly
    sampled from the $[0,1)$ interval, and we keep these random parameters fixed for all repetitions. We 
    then set the reward distribution $p(\cdot|x,a)$ for each context-action pair $(x,a)$ as a Bernoulli distribution
    with parameter $M_{a,a*(x)}$, where $a^*(x)$ denotes the optimal action associated with context $x$.

    The cost-sensitive classification oracle is implemented by fitting a multi-variate ridge regressor, with one 
target for each action\footnote{The choice of the regularization parameter $\alpha$
    did not seem to impact significantly the result of the experiments.}.
    Given any context $x$, the regressor can be queried to predict the reward for
    each arm, and a \emph{max} or \emph{softmax} can be used to construct
    a policy to select the best arm.
    \begin{figure}[t]
        \centering
        \includegraphics[width = .95\textwidth]{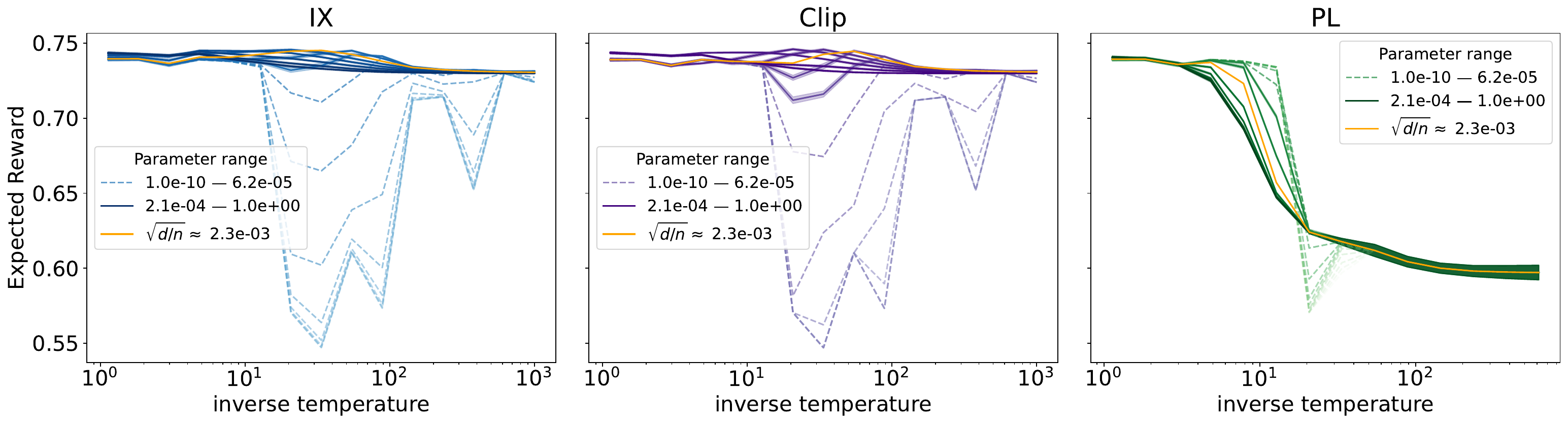}
         \caption{The performance of \alg, \textsc{PIWO-Clip}, and the algorithm of \citet{Wang2023} as a function of 
        the softmax parameter of the behavior policy. Different curves correspond to different hyperparameters for the 
        algorithms, with lighter tones representing smaller hyperparameters and darker tones representing larger ones.}
        \label{fig:experiment}
    \end{figure}
    In order to generate a range of behavior policies, we retain 10\% of the data to train an estimator
    of the reward for each arm using the regressor described above with the true mean rewards as labels. We then use 
    the predicted rewards to construct 20 softmax behavior policies, by varying the inverse temperature parameter as
    \texttt{logspace(-1, 3, 20)}.

    We then collect an offline dataset using each of the behavior policies and train our method \alg, 
    its variation \textsc{PIWO-Clip}, and the algorithm of \citet{Wang2023}, using the CSC oracle described above with 
    an argmax
    to select the optimal action, and varying their hyper-parameter over a wide range (i.e.
    \texttt{logspace(-10, 0, 20)}).
    Finally we compute the expected reward for each combination of behavior
    policy and hyper-parameter, and show the result in Figure~\ref{fig:experiment}.
    It can be observed how most choices of hyper-parameters result in good performance
    for \alg and \textsc{PIWO-Clip}, while the same cannot be said for PIWO-PL, which is very sensitive
    to small probabilities in the behavior policy and needs to compensate them
    with a very careful choice of its hyper-parameter. In particular, we note that in some experiments with 
    large softmax parameters, $\mu(a|x)$ can be as low as $10^{-100}$ for some context-action pairs, and thus 
    even a seemingly negligible regularization parameter like $\beta = 10^{-20}$ can result in massive 
    pessimistic adjustments. In contrast, \alg is robust to the presence of such 
    small observation probabilities and continues to work well for a broad range of hyperparameter choices. As 
    expected, \textsc{PIWO-Clip} performs very similarly to PIWO-IX due to the close similarity between these two 
    methods.
    More details about the experiments are provided in Appendix~\ref{sec:more-experiments}.

\section{Discussion}\label{sec:discussion}
We now provide some additional discussion on our results, related work, and open problems.

\paragraph{No more uniform coverage.} The bounds we have proved are tighter than any that are known 
in the literature, and they have the particular strength that they do not require the action probabilities to be 
strictly bounded away from zero. Virtually all previous bounds require this ``uniform-coverage'' assumption, largely 
due to their excessive reliance on textbook concentration results like Bernstein's, Bennett's, or Freedman's 
inequalities. The only result we are aware of that does not explicitly suffer from this limitation is by 
\citet{Jin2023}, 
who rely on a very sophisticated new proof technique which eventually does not yield easily interpretable performance 
bounds due to the appearance of some higher moments of the importance weights. The key to our stronger results is the 
observation that the tails of importance-weighted reward estimators are \emph{asymmetric}: their lower tails are always 
light, and thus one only has to tame the upper tails via pessimistic adjustments. This simple observation 
allows us to derive very tight bounds using a few lines of elementary derivations. If there is any moral to this story, 
then it is that one should always avoid using two-sided concentration inequalities for importance-weighted estimators 
(at least as long as the rewards are positive).

\paragraph{Implicit exploration and clipped importance weighting.} A perhaps more traditional way to control the tails 
of importance-weighted estimators is the clipped importance weighting (CIW) estimator we have defined in 
Section~\ref{sec:experiments}. Variants of this 
estimator have been studied at least since the work of \citet{Ion08} and vigorously applied in the offline 
learning literature \citep{Bottou2013,Sakhi2023,Flynn2023}. Interestingly, despite its broad usage, we are not 
aware of any work in this context that has worked out expressions for the bias of the CIW estimator, much less derived 
a 
regret bound for the resulting offline learning scheme. We believe that our results for the closely related IX 
estimator should essentially all apply to the CIW estimator and indeed our experiments show that they behave 
nearly identically in the settings we have tested. Nevertheless, we suspect that analyzing this estimator would end up 
being considerably more involved than our own analysis, but of course we would love to be proved wrong by future 
work.

\paragraph{Reward-scaled coverage ratios.} A subtle improvement of our bounds as opposed to the ones of 
\citet{Wang2023} is that they depend on the \emph{reward-scaled} version of the coverage ratio. 
This implies that bounds expressed in terms the scaled ratio $C_\gamma(\pi^*)$ can be much tighter than ones expressed 
in terms of $C(\pi^*)$ when the rewards of the comparator policy $\pi^*$ ``tend to be small'' in an appropriate sense. 
Note that this is a significant improvement in practical applications like online recommendation systems, where 
expected rewards correspond to clickthrough rates, which are very close to zero even for the very best ad campaigns. 
In the special case where rewards are negatively correlated with the importance weights (which may intuitively happen 
if the behavior policy is ``reasonably good'' in the sense that it puts larger weights on good actions), the coverage 
ratio against the optimal \emph{deterministic} policy $\pi^*$ can be shown to satisfy
$
C_\gamma(\pi^*)  \le
v(\pi^*) C(\pi^*)$,
thus improving greatly over standard bounds that depend on $C(\pi^*)$. Bounds that improve for small expected rewards 
are known in the bandit literature at least since the work of \citet{ACBFS02}, and we are curious if guarantees like 
the 
above can be proved under more general conditions for offline learning as well. \looseness=-1

\paragraph{Lower bounds.} The ``optimality'' of pessimistic offline learning methods is a contentious topic that we 
prefer not to discuss here in much detail. In particular, even in the simplest case of offline learning in multi-armed 
bandits, \citet{Xiao2021} have shown that a large range of algorithms including pessimistic, greedy, and optimistic 
methods satisfies the standard notion of minimax optimality, and there is thus nothing special about pessimistic 
methods in these terms. Putting this alarming concern aside, pessimistic algorithms tend to have the property that 
their regret scales with the minimax sample complexity of \emph{estimating} the value of the comparator policy  
\citep{Xiao2021,JYW21}. In our case, it is not entirely 
clear if this statement continues to be true. In the special case of multi-armed bandits with binary rewards and a 
deterministic comparator policy, our bound  matches the lower bound proved by 
\citet{LMS15} (up to a $\log K$ factor). That said, already in the case of stochastic comparator policies, our 
upper bounds no longer match the minimax sample complexity of estimation. Finding out if better algorithms with 
matching regret guarantees can be developed is a very interesting research question that we leave open for now.

\paragraph{Computational-statistical tradeoffs.}
As we show in this paper, it is possible to develop oracle-efficient algorithms with good statistical guarantees. 
However, these algorithms don't seem to demonstrate the correct scaling with the problem 
complexity unless prior knowledge of problem parameters is provided to the algorithm. This limitation can be bypassed 
by a more involved algorithm we describe in Appendix~\ref{sec:adaptive}, but the resulting method cannot apparently be 
implemented via a single call to the optimization oracle. Whether or not this computational-statistical tradeoff is 
inherent to the problem is unclear at this point and warrants further research.

\paragraph{Further refinements.} 
Our results can be extended in a number of straightforward ways by building on previous developments in the literature. 
For instance, the dependence on $\log |\Pi|$ appearing in our main results can be most likely replaced by other 
complexity measures like covering numbers or the Natarajan dimension of the policy class, by adapting the techniques of 
either \citet{Swaminathan2015} or \citet{Jin2023}. Similar bounds can be recovered by our PAC-Bayesian guarantees 
presented in Section~\ref{sec:PAC-Bayes} by building on techniques of \citet{Aud04,Cat07} (see also \citealp{GSZ21}). 
Another very simple generalization that our framework can readily handle is the case of adaptive behavior policies, 
where each sample point $(X_t,A_t,R_t)$ can be generated by a different behavior policy $\mu_t$ that may potentially 
depend on all past observations. The concentration bounds of Lemmas~\ref{lem:IX_upper} and~\ref{lem:IX_lower} can be 
very easily adapted to deal with such observations, and accordingly a version of our main result can be proved with the 
quantity $\EE{\frac 1n \sum_{t=1}^n \sum_a \frac{\pi^*(a|X_t)}{\mu_{t}(a|X_t) + \gamma}\cdot r(X_t,a)}$ taking the role 
of $C_\gamma(\pi^*)$. We hope that the simplicity of our techniques will enable further progress on the topic of 
importance-weighted offline learning, and in particular that further interesting extensions will be uncovered by future 
work.

    \bibliography{references.bib,shortconfs.bib}

\clearpage
\appendix

\section{Omitted proofs}\label{sec:proofs}
In this section, we prove our main technical lemmas. To facilitate this effort, we introduce the shorthand 
notations
\[
\rht = \frac{\pi(A_t|X_t)}{\mu(A_t|X_t)}R_t,
\quad\quad\mbox{and}\quad\quad
\rtt = \frac{\pi(A_t|X_t)}{\mu(A_t|X_t)+\gamma}R_t,
\]
and note that $\hV_n = \frac 1n \sum_{t=1}^n \rht$ and $\tV_n = \frac 1n \sum_{t=1}^n \rtt$, and also recall that 
$\E{\rht} = v(\pi)$ and $\E{\rtt} = v(\pi) - \gamma C_\gamma(\pi)$ holds for all $t$.

\subsection{The proof of Lemma~\ref{lem:IX_upper}}
Fix an arbitrary $\pi \in \Pi$. We start by using the elementary inequality $\log(1+y) \ge \frac{y}{1 + y/2}$ that 
holds for all $y \ge 0$ to show that
    \[
        \rtt = \frac{\pi(A_t|X_t)R_t}{\mu(A_t|X_t)+\gamma} \leq
            \frac{\pi(A_t|X_t)R_t}{\mu(A_t|X_t)+\gamma \pi(A_t|X_t)R_t} =
            \frac{1}{2\gamma}\cdot\frac{2\gamma \rht}{1 + \gamma \rht}
            \leq \frac{\log{(1 + 2\gamma \rht)}}{2\gamma}.
    \]
    This implies that
    \[
        \E{e^{2\gamma \rtt}} \leq \E{1 + 2\gamma\rht} = 1 + 2\gamma v(\pi) \leq e^{2\gamma v(\pi)},
    \]
    where the last step follows from the inequality $e^y \ge 1 + y$ that holds for all $y\in\real$.
    Using the independence of all observations, this implies $\E{e^{2\gamma\sumt (\rtt - v(\pi))}} \leq 1$,
    and thus an application of Markov's inequality yields
    \[
        \PP{\sumt (\rtt - v(\pi)) \geq \varepsilon} = \PP{e^{2\gamma\sumt (\rtt - v(\pi))} \ge e^{2\gamma\varepsilon}} 
\le 
e^{-2\gamma \varepsilon}
    \]
for any $\varepsilon \ge 0$. Setting $\varepsilon = \frac{\log(|\Pi|/\delta)}{2\gamma}$ and taking a union bound 
over all policies concludes the proof.
\jmlrQED
    
\subsection{The proof of Lemma~\ref{lem:IX_lower}}
Fix an arbitrary $\pi \in \Pi$. We start by noting that for any nonnegative random variable $Y$, and for any positive 
$\lambda$, we have
\[
 \E{e^{-\lambda Y}} \le \E{1 - \lambda Y + \lambda^2 Y^2 / 2} \le e^{- \lambda \E{Y} + \lambda^2 \E{Y^2} / 2},
\]
where the first inequality follows from $e^{-y} \le 1 - y + y^2 /2$ that holds for all $y\ge 0$ and the second from 
$e^{y} \ge 1 + y$ that holds for all $y\in \real$. Apply this inequality with $Y = \rtt$ and note that
\begin{align*}
        \E*{\pa{\rtt}^2} &= \E*{\frac{\pa{\pi(A_t|X_t)}^2}{(\mu(A_t|X_t)+\gamma)^2}\cdot R^2_t}
        \le 
        \E*{\sum_a \II{A_t = a} \frac{\pi(a|X_t) }{(\mu(a|X_t)+\gamma)^2}\cdot r(X_t,a)} 
        \\
        & \leq \E*{\sum_a \frac{\pi(a|X_t) }{\mu(a|X_t)+\gamma} \cdot r(X_t,a)}= C_\gamma(\pi),
    \end{align*}
    where the first inequality used the boundedness of the rewards to show $\EE{R_t^2} \le \EE{R_t} = \EE{r(X_t,a)}$ 
and $\pa{\pi(a|X_t)}^2 \le \pi(a|X_t)$, and the second inequality used that $\EEcc{\II{A_t=a}}{X_t} = \mu(a|X_t)$.

Using the independence of all observations, this implies $\E{e^{\lambda\sumt (\E{\rtt} - \rtt - \lambda 
C_\gamma(\pi)/2)}} \leq 1$. Recalling that $\E{\rtt} = v(\pi) - \gamma C_\gamma(\pi)$, an application of Markov's 
inequality yields
    \[
        \PP{\sumt \pa{v(\pi) - \rtt - \pa{\gamma + \lambda /2} C_\gamma(\pi)} \geq \varepsilon} \le 
e^{-2\gamma\varepsilon}.
    \]
Setting $\lambda = 2\gamma$ and $\varepsilon = \frac{\log(|\Pi|/\delta)}{2\gamma}$, and finally taking a union bound 
over all policies concludes the proof.
\jmlrQED

\subsection{The proofs of Lemmas~\ref{lem:IX_upper_PAC-Bayes} 
and~\ref{lem:IX_lower_PAC-Bayes}}\label{sec:proofs_PAC-Bayes}
To prove Lemma~\ref{lem:IX_upper_PAC-Bayes}, let us first fix an arbitrary $Q \in \Delta_{\Pi}$, and recall from 
the proof of Lemma~\ref{lem:IX_upper} that $\E{e^{2\gamma \rtt}} \leq e^{2\gamma v(\pi)}$, holds for all fixed $\pi$. 
Thus, since $P$ is independent of the random observations, we also have
    \[
        \EE{\int e^{2\gamma \sum_{t=1}^n \pa{\rtt - v(\pi)}} \dd P(\pi)} \leq 1.
    \]
    Now, let us introduce the notation $\rho_\pi(Q,P) = \log \frac{\dd Q}{\dd P} (\pi)$ and write
    \begin{align*}
       & \PP{\int \pa{\sumt (\rtt - v(\pi)) - \frac{\rho_\pi(Q,P)}{2\gamma}}\dd Q(\pi) \geq 
\varepsilon} 
\\
&\qquad \le
        \EE{e^{2\gamma \int \pa{\sumt (\rtt - v(\pi))  - \frac{\rho_\pi(Q,P)}{2\gamma}}\dd Q(\pi)}}
e^{-2\gamma \varepsilon}
\\
&\qquad \le \EE{\int e^{2\gamma \pa{\sumt (\rtt - v(\pi))  - \frac{\rho_\pi(Q,P)}{2\gamma}}}\dd Q(\pi)}
e^{-2\gamma\varepsilon} 
\\
&\qquad = \EE{\int e^{2\gamma \pa{\sumt (\rtt - v(\pi))}} \frac{\dd P}{\dd Q}(\pi) \dd Q(\pi)}
e^{-2\gamma \varepsilon}
\\
&\qquad = \EE{\int e^{2\gamma \pa{\sumt (\rtt - v(\pi))}} \dd P(\pi)}e^{-2\gamma \varepsilon} \le 
e^{-2\gamma \varepsilon}.
    \end{align*}
    Here, the first step follows from Markov's inequality, the second from Jensen's inequality for the convex function 
$y\mapsto e^{2\gamma y}$, the third from the definition of $\rho_\pi(Q,P)$, the fourth from the definition of the 
Radon--Nykodim 
derivative $\frac{\dd P}{\dd Q}$, and the last step from the inequality that we have established above.
Noticing that $\int \rho_\pi(Q,P) \dd Q(\pi) = \KL{Q}{P}$ and setting $\varepsilon = \frac{\log(1/\delta)}{2\gamma}$ 
concludes the proof of Lemma~\ref{lem:IX_upper_PAC-Bayes}. The proof of Lemma~\ref{lem:IX_lower_PAC-Bayes} then follows 
analogously by recalling from the proof of Lemma~\ref{lem:IX_lower} that $\E{e^{2\gamma(v(\pi) - \tr_t(\pi) - 2\gamma 
C_\gamma(\pi))}} \leq 1$, and then following the same steps as above.
\jmlrQED

\section{Adaptivity to the coverage}\label{sec:adaptive}
One shortcoming of the result in Theorem~\ref{thm:main} is that it scales linearly with $C_\gamma(\pi^*)$ even though 
prior results suggest that a scaling with $\sqrt{C_0(\pi^*)}$ should be possible \citep{Swaminathan2015,Wang2023}. 
This improvement can be trivially achieved by setting $\gamma = \sqrt{\frac{\log(|\Pi|/\delta)}{C_0(\pi^*)n}}$, but 
this requires prior knowledge of $C_0(\pi^*)$ which is of course unavailable in practice (at least in the most 
interesting case where $\pi^*$ is the optimal policy).

This limitation can be addressed by defining the following \emph{non-uniformly scaled} version of the IX estimator:
\begin{equation}\label{eq:IX-scaled}
 \tV^\dag_n(\pi) = \frac 1n \sum_{t=1}^n \frac{\pi(A_t|X_t)}{\mu(A_t|X_t) + \gamma_\pi} \cdot R_t - \frac{\log \pa{
|\Pi|/\delta}}{2\gamma_\pi}.
\end{equation}
Here,  $\gamma_\pi > 0$ is a \emph{policy-dependent} IX parameter that is potentially different for each policy $\pi$.
Using this estimator, we define a variant of our main algorithm called \emph{coverage-scaled PIWO-IX} that outputs
\[
 \hpi_n = \argmin_{\pi\in\Pi} \tV^\dag_n(\pi).
\]
Notice that, unlike PIWO-IX, this algorithm cannot be directly implemented 
using a standard optimization oracle due to the policy-dependent IX parameters $\gamma_\pi$.
The following theorem is straightforward to prove using our previously established Lemmas~\ref{lem:IX_upper} 
and~\ref{lem:IX_lower}:
\begin{theorem}\label{thm:adaptive}
With probability at least $1-\delta$, the regret of coverage-scaled \alg against any comparator policy $\pi^*\in\Pi$ 
satisfies
        \[
            \Reg_n(\pi^*) \leq \frac{\log\left( \nicefrac{2|\Pi|}{\delta} \right)}{\gamma_{\pi^*} n} + 
2\gamma_{\pi^*} 
C_{\gamma_{\pi^*}}(\pi^*).
        \]
        Furthermore, by setting $\gamma_\pi = \sqrt{\frac{\log(\nicefrac{2|\Pi|}{\delta})}{2C_0(\pi)n}}$ for 
        each $\pi$, the bound becomes
        \[
            \Reg_n(\pi^*) \leq  \sqrt{\frac{8C_0(\pi^*)\log\left(\nicefrac{2|\Pi|}{\delta}
\right)}{n}}.
        \]
\end{theorem}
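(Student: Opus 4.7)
The plan is to follow the template of the proof of Theorem~\ref{thm:main} verbatim, but applying Lemmas~\ref{lem:IX_upper} and~\ref{lem:IX_lower} with a \emph{policy-dependent} IX parameter. The crucial enabling observation is that $\gamma_\pi$ is a deterministic function of $\pi$ --- it depends only on $C_0(\pi)$, which is an expectation under $\nu$ and $\mu$ and not on the data --- so the tail arguments underlying both lemmas go through unchanged for each pair $(\pi,\gamma_\pi)$.

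Concretely, I would first revisit the proofs of Lemmas~\ref{lem:IX_upper} and~\ref{lem:IX_lower} and record that for any fixed $\pi$ and any fixed $\gamma > 0$, each yields a one-sided Markov-type tail bound of rate $e^{-2\gamma\varepsilon}$ at deviation $\varepsilon$. Applying each bound to the pair $(\pi,\gamma_\pi)$ with $\varepsilon = \log(2|\Pi|/\delta)/(2\gamma_\pi)$ makes the failure probability at most $\delta/(2|\Pi|)$ per policy and per side, and a union bound over $\pi\in\Pi$ and the two tails produces, with probability at least $1-\delta$, simultaneously for every $\pi\in\Pi$,
\[
\tV_n(\pi) - v(\pi) \leq \frac{\log(2|\Pi|/\delta)}{2\gamma_\pi n}, \qquad v(\pi) - \tV_n(\pi) \leq \frac{\log(2|\Pi|/\delta)}{2\gamma_\pi n} + 2\gamma_\pi C_{\gamma_\pi}(\pi).
\]
The subtracted penalty baked into $\tV^\dag_n(\pi)$ is tuned to match exactly the upper-tail slack above, policy by policy.

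On this event, the chaining step mirrors the proof of Theorem~\ref{thm:main} exactly, using that $\hpi_n$ maximizes $\tV^\dag_n$ over $\Pi$:
\begin{align*}
v(\hpi_n) &\geq \tV_n(\hpi_n) - \frac{\log(2|\Pi|/\delta)}{2\gamma_{\hpi_n} n} = \tV^\dag_n(\hpi_n) \geq \tV^\dag_n(\pi^*) = \tV_n(\pi^*) - \frac{\log(2|\Pi|/\delta)}{2\gamma_{\pi^*} n}\\
&\geq v(\pi^*) - \frac{\log(2|\Pi|/\delta)}{\gamma_{\pi^*} n} - 2\gamma_{\pi^*} C_{\gamma_{\pi^*}}(\pi^*).
\end{align*}
The $\gamma_{\hpi_n}$-dependent penalty is absorbed into the upper-tail slack, which is precisely why the final bound depends only on the data-independent $\gamma_{\pi^*}$. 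Rearranging gives the first bound. For the second, I plug in $\gamma_{\pi^*} = \sqrt{\log(2|\Pi|/\delta)/(2C_0(\pi^*)n)}$ and use the monotonicity $C_{\gamma_{\pi^*}}(\pi^*) \leq C_0(\pi^*)$ (enlarging the denominator shrinks each summand in the expectation defining $C_\gamma$); both resulting terms collapse to $\sqrt{2C_0(\pi^*)\log(2|\Pi|/\delta)/n}$, summing to the claimed $\sqrt{8C_0(\pi^*)\log(2|\Pi|/\delta)/n}$.

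There is no genuine analytical obstacle: the argument is essentially routine once one sees that the data-independence of each $\gamma_\pi$ legitimizes the per-policy application of the two tail bounds, and that the adjustment in $\tV^\dag_n$ was designed precisely to absorb the upper-tail deviation at $\hpi_n$. The real cost of this refinement is computational rather than statistical --- as already noted in the excerpt, the policy-dependent penalty cannot be folded into per-sample gain vectors, so the algorithm is no longer implementable with a single CSC-oracle call.
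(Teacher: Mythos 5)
Your proposal is correct and follows essentially the same route as the paper's proof: per-policy application of the two tail lemmas with the data-independent $\gamma_\pi$, a union bound, the three-step chaining via the definition of $\hpi_n$ as the maximizer of $\tV^\dag_n$, and the monotonicity $C_{\gamma}(\pi^*)\le C_0(\pi^*)$ to finish. The only (cosmetic) difference is that you spell out the union-bound bookkeeping and the absorption of the $\gamma_{\hpi_n}$-dependent slack more explicitly than the paper does.
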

\begin{proof}
First observe that the statements of Lemmas~\ref{lem:IX_upper} and~\ref{lem:IX_lower} can be trivially adjusted to show 
that the bounds
\[
0 \le v(\pi) - \tV_n^\dag(\pi) \le \frac{\log({\nicefrac{2|\Pi|}{\delta}})}{\gamma_\pi n} + 2\gamma C_{\gamma_\pi}(\pi).
\]
hold simultaneously for all policies with probability at least $1-\delta$.
Then, by the definition of the algorithm, we obtain
\begin{align*}
 v(\hpi_n) &\ge \tV^\dag_n(\hpi_n) \ge \tV^\dag_n(\pi^*) \ge v(\pi^*) - 
\frac{\log\left(\nicefrac{2|\Pi|}{\delta}\right)}{\gamma_{\pi^*} n} - 2\gamma_{\pi^*} C_{\gamma_{\pi^*}}(\pi^*).
\end{align*}
This concludes the proof of the first claim. The second claim can be verified by noticing that $C_\gamma(\pi^*) \le 
C_{0}(\pi^*)$ for all $\gamma > 0$ and plugging in the choice of $\gamma_{\pi}$ stated in the theorem.
\end{proof}

\section{Further details on the experiments}\label{sec:more-experiments}
    In this section we give more detail on all the experiments we ran. 
    The first step we performed was to use $10\%$ of the data to fit a multivariate
    ridge regressor $\texttt{reg}(x,a)$ to predict the expected reward of each action,
    given any context. For each context $x$ and each corresponding optimal action $a^*$
    in the data, we selected $M_{\cdot,a^*}$
    as the label vector (having one entry for each possible action).

    We then used the reamining $90\%$ of the data to perform two sets of
    experiments. In the
    first set, which is the one described in the main text (Section~\ref{sec:experiments}),
    we considered 20 softmax behavior policies, varying their inverse
    temperature parameter $\eta$ as \texttt{logspace(-1, 3, 20)}. That is,
    \[
        \pi_\eta(a|x) \propto \exp(\eta\, \texttt{reg}(x,a)).
    \]

    We repeated each set of experiments 10 times ($i\in[10]$), using a $10$-fold validation procedure.
    That is, the data was first partitioned into 10 non overlapping folds.
    On each repetition $i$, 9 folds are used to generate the training data for the
    algorithms, by simulating the interaction of each behavior policy $\pi_\eta$ and the
    bandit instance. The resulting training dataset $\D_{\eta,i}$ was used to train
    each algorithm for each possible hyper-parameter choice $h\in\texttt{logspace(-10, 0, 20))}$.
    Finally, each trained algorithm $\mathfrak{A}_{\eta,i,h}$ is evaluated using the
    data in the remaining fold, by computing the expected regret using the
    true mean rewards.

    This set of experiments was then repeated for a different set of ``bad''
    behavior policies, which were defined as
    \[
        \pi_\eta(a|x) \propto \exp(-\eta\, \texttt{reg}(x,a)).
    \]
    
    The results for the two sets of experiments are shown respectively in
    Figures~\ref{fig:full-good} and~\ref{fig:full-bad}.
    On each figure, the first row of plots shows the expected reward as a function
    of the inverse temperature parameter $\eta$. Each plot on the row is for
    one of the three different algorithms, and it contains a line for each
    possible hyper-parameter. The lines are colored using a gradient from
    lighter to darker to represent increasing hyper-parameter values. In orange
    we highlighted the learning rate corresponding to $\sqrt{d/n}$, which we use as a 
    crude approximation of the hyper-parameter recommended by
    theory, $\sqrt{\log|\Pi|/n}$. In addition,
    values of the hyper-parameters much smaller than $\sqrt{d/n}$ are represented
    with a dashed line. All lines (excluding for clarity of the representation
    the dashed ones) have a shaded region representing the standard
    deviation over the 10 runs. The second row of plots shows the expected
    regret as a function of the hyper-parameter $h$. Thus, we can observe
    a line for each different behavior policy parameter $\eta$. Here the lines
    are lighter for smaller values of $\eta$, and darker for bigger values
    of $\eta$.
    
From the plots, we can infer that PIWO-IX performs well when the behavior policy is ``good'' and 
$\gamma$ is set in a broad proximity of its theoretically recommended value. This behavior appears to be robust as we 
vary the degree of ``goodness'' of the policy modulated by the softmax parameter $\eta$, and in particular performance 
stays good even as $\eta$ approaches its higher extremes and the behavior policy gets more and more deterministic. As 
expected, \textsc{PIWO-Clip} behaves comparably. In comparison PIWO-PL is a lot less robust in this case and its 
performance decays as $\eta$ increases, most likely due to the more and more extreme values of the importance weights 
arising from some sampling probabilities approaching zero. We note that the the case of ``good'' behavior policies is 
the most practical use case, and our experiments suggest that our algorithm performs excellently in this scenario for a 
wide range of hyperparameters.

In comparison, the picture changes when considering the case of ``bad'' behavior policies. In this case, \alg and 
performs worse and worse as $\gamma$ is increased, especially for large values of $\eta$ corresponding to particularly 
bad behavior policies. This is not surprising given that the policy coverage ratio blows up in this extreme, as less 
and less mass is put on well-performing actions. Also notice that increasing the regularization parameter $\gamma$ 
forces the algorithm to be more and more pessimistic and thus stay closer and closer to the behavior policy, which 
again results in decaying performance. The performance of PIWO-PL is less consistent in this case, and it is hard to 
read out patterns that are well-predicted by theory. 

    \begin{figure}[t]
        \centering
        \includegraphics[width = .95\textwidth]{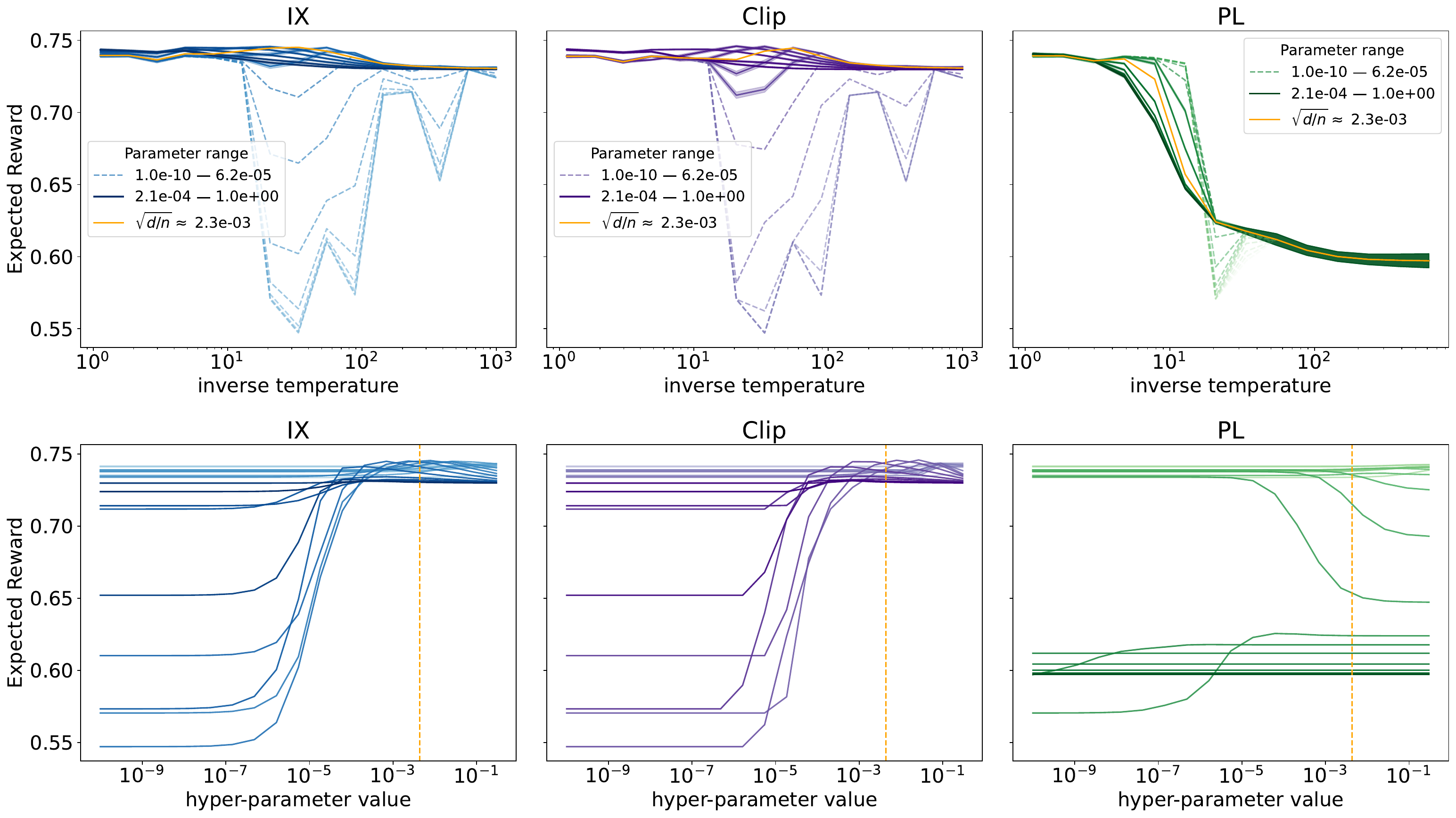}
        \caption{Results of \alg, \textsc{PIWO-Clip}, and \textsc{PIWO-PL} with good behavior policies.}
        \label{fig:full-good}
    \end{figure}
    \begin{figure}[t]
        \centering
        \includegraphics[width = .95\textwidth]{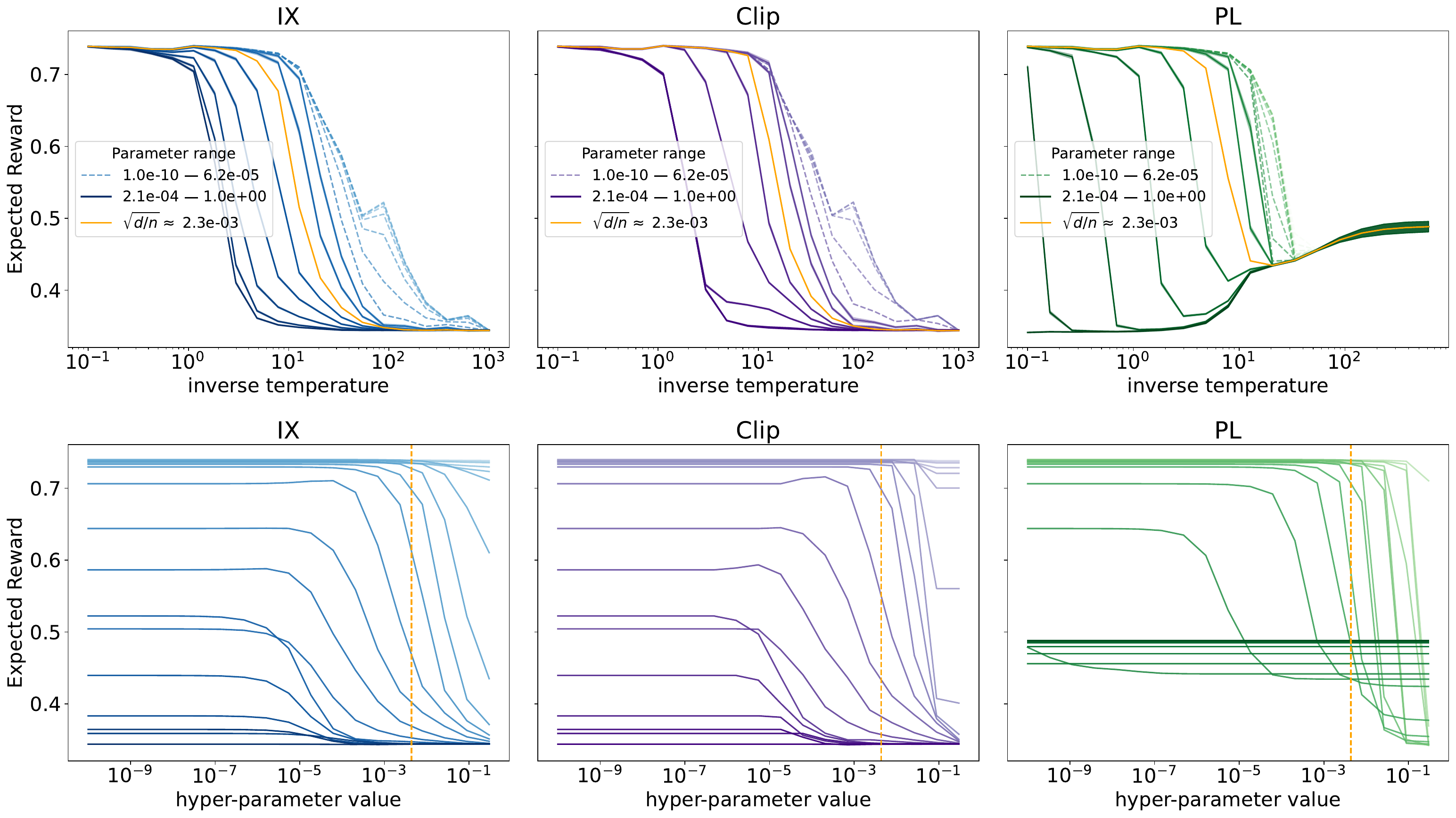}
        \caption{Results of \alg, \textsc{PIWO-Clip}, and \textsc{PIWO-PL} with bad behavior policies.}
        \label{fig:full-bad}
    \end{figure}

\end{document}